\newtheorem{proposition}{Proposition}
\newtheorem{corollary}{Corollary}
\begin{document}
\vspace*{0.2in}

% Title must be 250 characters or less.
\begin{flushleft}
{\Large
\textbf\newline{Cross-Validation for Training and Testing Co-occurrence Network Inference Algorithms
} % Please use "sentence case" for title and headings (capitalize only the first word in a title (or heading), the first word in a subtitle (or subheading), and any proper nouns).
}
\newline
% Insert author names, affiliations and corresponding author email (do not include titles, positions, or degrees).
\\
Daniel Agyapong\textsuperscript{1},
Jeffrey Ryan Propster\textsuperscript{2},
Jane Marks\textsuperscript{2},
Toby Dylan Hocking\textsuperscript{1}
% Jeffrey Propster\textsuperscript{2},
% with the Lorem Ipsum Consortium\textsuperscript{\textpilcrow}
\\
\bigskip
\textbf{1} School of Informatics, Computing, and Cyber Systems, Northern Arizona University, Flagstaff, Arizona, United States of America
\\
\textbf{2} Department of Biological Sciences, Northern Arizona University, Flagstaff, Arizona, United States of America
\\
\bigskip

\end{flushleft}
% Please keep the abstract below 300 words
\section*{Abstract}
Microorganisms are found in almost every environment, including the soil, water, air, and inside other organisms, like animals and plants. 
While some microorganisms cause diseases, most of them help in biological processes such as decomposition, fermentation and nutrient cycling. 
A lot of research has gone into studying microbial communities in various environments and how their interactions and relationships can provide insights into various diseases.
Co-occurrence network inference algorithms help us understand the complex associations of micro-organisms, especially bacteria. 
Existing network inference algorithms employ techniques such as correlation, regularized linear regression, and conditional dependence, which have different hyper-parameters that determine the sparsity of the network. 
Previous methods for evaluating the quality of the inferred network include using external data, and network consistency across sub-samples, both which have several drawbacks that limit their applicability in real microbiome composition data sets.
We propose a novel cross-validation method to evaluate co-occurrence network inference algorithms, and new methods for applying existing algorithms to predict on test data.
Our empirical study shows that the proposed method is useful for hyper-parameter selection (training) and comparing the quality of the inferred networks between different algorithms (testing).

\section*{Introduction}
Micro-organisms form complex ecological interactions such as mutualism, parasitism/predation, competition, commensalism and amensalism~\cite{doi:10.1128/mSystems.00124-19}. 
The human body hosts complex microbial communities consisting of bacteria, protozoa, archaea, viruses, and fungi.  
The human intestine alone has trillions of bacteria (microbiota), that have a symbiotic relationship with the host. 
The main function of the microbiota is to protect the intestine against colonization by harmful microorganisms like pathogens through mechanisms, such as competition for nutrients and modulation of host immune responses. 
Studying the interaction of the microbiota with pathogens and the host can offer new insights into disease pathogenesis and potential treatments~\cite{10.1111/j.1753-4887.2012.00493.x}. 
Over the past several years, the importance of the microbiome to human health and disease has become increasingly recognized. 
The trillions of microbes can protect us from colonization by pathogens, promote immunoregulation and tolerance by our own immune systems, and digest many of the foods that we ourselves cannot~\cite{BELKAID2014121}.
However, they can also contribute to disease, if their balance is disrupted by antibiotics, immune dysregulation, or other disturbances. 
The focus of this field has largely been on the bacterial members of the microbiome, since they make up the largest proportion of microbiota~\cite{doi:10.1126/science.aad9358}.
The bacteria exist alongside a diversity of organisms which can interact with each other and the host to impact health~\cite{Lloyd-Price2016}. 
Therefore, understanding the ecological interactions that occur in microbial communities is very crucial in maintaining a well-functioning ecosystem. 
To understand the interactions of microbial communities, it is beneficial to construct ecological networks that depict their positive and negative associations.
Numerous algorithms exist, each with their own set of hyper-parameters used to determine the level of sparsity (number of edges) in a network.

\paragraph{Summary of Contributions.}
Previous studies have primarily focused on evaluating co-occurrence network inference algorithms through external data, simulations, and network consistency across sub-samples.
Cross-validation has been used previously in training LASSO using validation error when used for prediction on validation set. 
Similarly, it has also been used in training Gaussian Graphical Models(GGM) for inferring a sparse inverse covariance matrix, achieved by maximizing the log likelihood of observed data under a Gaussian distribution with the estimated precision matrix.
In this paper, we present novel contributions that extend the existing research in this field.
We summarize the contributions of our paper, which enhance the current cross-validation technique, in Table 1~\ref{tab:sum_of_contrib}.
Firstly, we introduce new techniques for leveraging well-established algorithms such as Pearson/Spearman correlation and Gaussian Graphical Model for prediction on held-out or test data.
Secondly, we propose the utilization of prediction error on test set in cross-validation as a more widely applicable method for evaluating various algorithms on real microbiome data.
Lastly, we propose training the optimum correlation threshold in correlation based algorithms with cross-validation as compared to previous methods that use prior knowledge or pre-determined correlation threshold.

% \begin {table} [H] 
% \centering 
% \caption {{\bf Summary of contributions}} 
% \begin {tabular} {|l|p{2.5cm}|p{2.5cm}|p{2.5cm}|p{2.5cm}|} 
% \hline \multicolumn {1} {|p{2cm}|} {\raggedright \bf Algorithm} & 
% \multicolumn {1} {p{2.5cm}|} {\raggedright \bf Train Previous} & 
% \multicolumn {1} {p{2.5cm}|} {\raggedright \bf Train Proposed} & 
% \multicolumn {1} {p{2.5cm}|} {\raggedright \bf Test Previous} & 
% \multicolumn {1} {p{2.5cm}|} {\raggedright \bf Test Proposed} \\ \hline
% Correlation & Prior knowledge, Pre-determined correlation threshold & 
% Cross-validation using prediction on validation set &
% External data/ Subsample network consistency & 
% Cross-validation using prediction on test set\\ \hline
% LASSO & Cross-validation using prediction on validation set & 
% Cross-validation using prediction on validation set &
% External data/ Subsample network consistency & 
% Cross-validation using prediction on test set\\ \hline
% GGM & Cross-validation using prediction on validation set & 
% Cross-validation using prediction on validation set &
% External data/ Subsample network consistency & 
% Cross-validation using prediction on test set\\ \hline
% \end{tabular}
% \end{table}

\begin{table}[H]
\centering
\caption{\textbf{Summary of contributions}}
\begin{tabular}{|p{3cm}|p{5cm}|p{3cm}|}
\hline
\textbf{Algorithm} & \textbf{Cross-validation } & \textbf{Cross-validation } \\ 
 & \textbf{ for training} & \textbf{ for testing} \\ \hline
LASSO & \texttt{CCLasso (2015)}~\cite{10.1093/bioinformatics/btv349}, \texttt{REBACCA (2015)}~\cite{10.1093/bioinformatics/btv364}, \texttt{SPIEC-EASI (2015)}~\cite{10.1371/journal.pcbi.1004226} & Proposed \\ \hline
GGM & \texttt{gCoda (2017)}~\cite{doi:10.1089/cmb.2017.0054}, \texttt{MDiNE (2019)}~\cite{10.1093/bioinformatics/btz824}, \texttt{COZINE (2020)}~\cite{Ha2020} & Proposed \\ \hline
Correlation (Pearson/Spearman) & Proposed & Proposed \\ \hline
\end{tabular}
\label{tab:sum_of_contrib}
\end{table}

\subsection*{Microbiome composition data sets}
There has been some challenges in obtaining microbiome abundance in different environments~\cite{Gilbert2014}. 
High-throughput Sequencing is used to sequence large amounts of DNA fragments at a relatively low cost~\cite{10.1371/journal.pone.0093827}. 
This involves amplifying a particular region of the bacterial genome through Polymerase Chain Reaction (PCR) and subsequently sequencing the produced amplicons. 
This region represents the 16S rRNA gene in bacteria, extensively employed as indicators for microbial classification and identification. 
The processed sequence are classified into Operational Taxonomic Units (OTU) with the aid of an advanced software which compares the sequences to reference database like the Ribosomal Database Project~\cite{10.1093/nar/gkt1244} and the Green Genes Database~\cite{doi:10.1128/AEM.03006-05}. 
Table~\ref{tab:datasets} presents some real microbiome composition data from public sources. 
Each Operational Taxonomic Unit (OTU) data describes the taxonomic composition of different samples from various environments. 
In Figure~\ref{fig:cv}, microbiome composition data set is represented by $N \times D$ matrix of counts (abundance) of bacteria, where each column represents a different type of bacteria (taxon) and each row represents a different sample.

% Place tables after the first paragraph in which they are cited.
\begin{table}[H]
\centering
\caption{{\bf Publicly Available Microbiome Composition Datasets}}
\begin{tabular}{|l|l|r|r|}
\hline

\bf Data & \bf Algorithm  & \bf Samples & \bf Taxa    \\ \hline

\href{https://github.com/tinglab/mLDM/blob/master/CRC/glne007.csv}{glne007} & \texttt{mLDM}~\cite{YANG2017129} & 490 & 338\\ \hline

\href{https://github.com/tinglab/mLDM/blob/master/CRC/Baxter_CRC.RData}{Baxter\_CRC} & \texttt{mLDM} ~\cite{YANG2017129} & 490 & 117\\ \hline

\href{https://github.com/zdk123/SpiecEasi/blob/master/data/amgut2.filt.phy.rda}{amgut2*} & \texttt{SPIEC\_EASI} ~\cite{10.1371/journal.pcbi.1004226} &   296 & 138\\ \hline

\href{https://github.com/zdk123/SpiecEasi/blob/master/data/amgut1.filt.rda}{amgut1*} &\texttt{SPIEC\_EASI} ~\cite{10.1371/journal.pcbi.1004226} &   289 & 127\\ \hline

\href{https://github.com/joey711/phyloseq/blob/master/data/enterotype.RData}{enterotype} & \texttt{phyloseq} ~\cite{10.1371/journal.pone.0061217} & 280 & 553\\ \hline

 \href{https://github.com/sahatava/MixMPLN/blob/master/data/real_data.csv}{MixMPLN\_real\_data} & \texttt{MixMPLN}~\cite{10.1093/bioinformatics/btz370} &  195 & 129\\ \hline

\href{https://github.com/kevinmcgregor/mdine/blob/master/data/crohns.RData}{crohns*} & \texttt{MDiNE} ~\cite{10.1093/bioinformatics/btz824} & 100 & 5\\ \hline

\href{https://github.com/MinJinHa/COZINE/blob/master/data/iOraldat.rda}{iOraldat*} & \texttt{COZINE} ~\cite{Ha2020} & 86 & 63\\ \hline

\href{https://github.com/joey711/phyloseq/blob/master/data/soilrep.RData}{soilrep} & \texttt{phyloseq} ~\cite{10.1371/journal.pone.0061217} & 56 & 16825\\ \hline

\href{https://github.com/zdk123/SpiecEasi/blob/master/data/hmp2.rda}{hmp216S} & \texttt{SPIEC\_EASI} ~\cite{10.1371/journal.pcbi.1004226} & 47 & 45\\ \hline

\href{https://github.com/zdk123/SpiecEasi/blob/master/data/hmp2.rda}{hmp2prot} & \texttt{SPIEC\_EASI} ~\cite{10.1371/journal.pcbi.1004226} & 47 & 43\\ \hline

\href{https://github.com/joey711/phyloseq/blob/master/data/esophagus.RData}{esophagus} & \texttt{phyloseq} ~\cite{10.1371/journal.pone.0061217} & 3 & 58\\ \hline

\end{tabular}
\label{tab:datasets}
\end{table}

\subsection*{Categorization of previous algorithms}
Many algorithms have been proposed to infer co-occurrence networks from real microbiome data sets.
In Table~\ref{tab:previous-inference-algorithms}, we grouped previous network inference algorithms into four categories: Pearson correlation (Pearson), Spearman correlation (Spearman), Least Absolute Shrinkage and Selection Operator (LASSO), and Gaussian Graphical Model (GGM). 
For example, SparCC~\cite{10.1371/journal.pcbi.1002687} estimates the Pearson correlations of log-transformed abundance data and uses an arbitrary threshold to limit the network, whereas MENAP~\cite{Deng2012} uses Random Matrix Theory to determine the correlation threshold of the standardized relative abundance data.
Both CCLasso~\cite{10.1093/bioinformatics/btv349} and REBACCA~\cite{10.1093/bioinformatics/btv364} employ LASSO to infer correlations among microbes using log-ratio transformed relative abundance data.
mLDM~\cite{YANG2017129} utilizes a graphical model to infer associations among microbes, as well as associations between microbes and environmental factors whereas SPIEC-EASI~\cite{10.1371/journal.pcbi.1004226} infers conditional dependencies among only microbes.

\subsection*{Previous sparsity hyper-parameter training methods} 
Each of the algorithms have their own set of hyper-parameters used to determine the level of sparsity (number of edges) in a co-occurrence network. 
For instance, in the Pearson and Spearman correlation inference algorithms, there is a threshold on the correlation coefficient which is typically chosen arbitrarily or using prior knowledge~\cite{Deng2012,10.1371/journal.pcbi.1002687, 10.12688/f1000research.9050.2}; edges with absolute coefficient magnitude below the threshold are removed from the network. 
The LASSO uses the degree of L1 regularization, typically selected using cross-validation to determine the sparsity of the network~\cite{10.1093/bioinformatics/btv349}.
The GGM infers the conditional dependencies between taxa by estimating the sparsity pattern of the precision matrix using penalized maximum likelihood methods through cross-validation~\cite{10.1371/journal.pcbi.1004226}.

% Place tables after the first paragraph in which they are cited.
\begin{table}[H]
% \begin{adjustwidth}{-2.25in}{0in} % Comment out/remove adjustwidth environment if table fits in text column.
\centering
\caption{
{\bf Previous microbial network inference algorithms}}
\label{tab:previous-inference-algorithms}
\begin{tabular}{|l|l|}
\hline

\multicolumn{1}{|l|}{\bf Method}  & \multicolumn{1}{|l|}{\bf Category} \\ \hline

\texttt{SparCC (2012)}~\cite{10.1371/journal.pcbi.1002687} & Pearson \\ \hline

\texttt{MENAP (2012)}~\cite{Deng2012} & 
Pearson, Spearman \\ \hline

\texttt{CoNet (2016)}~\cite{10.12688/f1000research.9050.2} & Pearson, Spearman \\ \hline

\texttt{CCLasso (2015)}~\cite{10.1093/bioinformatics/btv349} & LASSO \\ \hline

\texttt{REBACCA (2015)}~\cite{10.1093/bioinformatics/btv364} & GGM, LASSO \\ \hline

\texttt{SPIEC-EASI (2015)}~\cite{10.1371/journal.pcbi.1004226} & GGM, LASSO \\ \hline

\texttt{gCoda (2017)}~\cite{doi:10.1089/cmb.2017.0054} & GGM\\ \hline

\texttt{MDiNE (2019)}~\cite{10.1093/bioinformatics/btz824} & GGM \\ \hline

\texttt{HARMONIES (2020)}~\cite{10.3389/fgene.2020.00445} & GGM \\ \hline

\texttt{mLDM (2020)}~\cite{YANG2017129} & GGM \\ \hline

\texttt{COZINE (2020)}~\cite{Ha2020} & GGM \\ \hline

\texttt{PLNmodel (2021)}~\cite{10.3389/fevo.2021.588292} & GGM \\ \hline

\end{tabular}
\label{table1}
\end{table}

\subsection*{Previous evaluation criteria} 
Various evaluation criteria have been utilized to assess the performance of different algorithms used for network inference. 
Two of the most common ones are the use of external data sources and the network consistency across sub-samples. 
However, both criteria have some limitations. 
The use of external data sources may suffer from the lack of ground truth (external data may not be available or reliable) and the potential biases in the data. 
Some previous work that used this criterion are SparCC~\cite{10.1371/journal.pcbi.1002687} and SPIEC-EASI~\cite{10.1371/journal.pcbi.1004226}. 
The network consistency across sub-samples may favor trivial models that infer no associations (edges) and thus perfect consistency. 
An example of previous work that used this criterion is CCLASSO~\cite{10.1093/bioinformatics/btv349}.
Table \ref{tab:existing_evaluation} summarizes some existing microbial inference algorithms, methods compared, how they compare and evaluation type.

\begin {table} [H] 
% \begin {adjustwidth}
% {-2.25in} {0in} 
\centering 
\caption {{\bf Existing Evaluation Methods}} 
\begin {tabular} {|l|p{0.5cm}|p{5.5cm}|p{2.5cm}|} 

\hline \multicolumn {1} {|p{2cm}|} {\raggedright \bf Algorithm} & \multicolumn {1} {p{2cm}|} {\raggedright \bf Algorithms compared} & \multicolumn {1} {p{5.5cm}|} {\raggedright \bf How they compare} & \multicolumn {1} {p{2.5cm}|} {\raggedright \bf Evaluation Type} \\ \hline

\texttt{SparCC(2012)} & \texttt{SparCC, Pearson} & Confusion matrix detected in the Pearson network by treating the SparCC network as the ground truth & External data (HMPOC dataset, build 1.0)~\cite{10.1371/journal.pone.0027310} \\ \hline

\texttt{REBACCA(2015)} & \texttt{REBACCA, SparCC, BP, ReBoot} & Consistency of positive and negative correlated taxonomic pairs identified independently from three data sets & External data (Mouse skin microbiota)~\cite{srinivas2013genome} \\ \hline

\texttt{SPIEC-EASI(2015)} & \texttt{SPIEC-EASIE, SparCC, CCREPE} & Consistency of sub-samples by measuring the Hamming distance between the hypothetical reference network and inferred network & External data (American Gut data set)~\cite{mcdonald2018american} \\ \hline
 
\texttt{CCLasso(2015)} & \texttt{CCLasso, SparCC} & Frobenius accuracy (measured by the Frobenius norm distance) between the estimated correlation matrices and a reference correlation matrix from data using half samples & Sub-sample analysis \\ \hline

\texttt{HARMONIES(2020)} & \texttt{HARMONIES, SPIEC-EASI, CCLasso, Pearson} & Accuracy of identifying true positive edges by comparing the estimated precision matrix with an arbitrarily chosen true one & External data \\ \hline

\texttt{mLDM(2020)} & \texttt{mLDM, SparCC, CClasso} & Power of association inference when compared to the reference association inference data~\cite{doi:10.1126/science.1262073} & External data (Tara Oceans Eukaryotic data) \\ \hline

\texttt{gCoda(2017)} & \texttt{gCoda, SPIEC-EASI} & False positive count on shuffled OTU data. & External data (Mouse Skin microbiome data)~\cite{srinivas2013genome} \\ \hline

\texttt{COZINE(2020)} & \texttt{COZINE, SPIEC-EASI, Ising} & The assortativity coefficient~\cite{PhysRevE.67.026126} (The likelihood of taxa existing within the same branch of the taxonomic tree to be interconnected within co-occurrence networks ) & External data (Oral microbiome data) \\ \hline

\end{tabular}
\label{tab:existing_evaluation}
% \end{adjustwidth}
\end{table}

\section*{Materials and Methods}
\subsection*{Preprocessing and Normalization of Dataset}
Microbial data sets are very high-dimensional in nature because they have substantial number of taxa that can be present in a single sample~\cite{doi:10.1128/mSystems.00016-19}. 
Their sparse nature makes even conventional machine learning algorithms struggle since they assume that most features are non-zero~\cite{hastie2009elements}.
Hence, it is crucial to apply appropriate preprocessing and normalization technique to convert the data set to a suitable format before conducting any data analysis~\cite{mayoue2012preprocessing}.
These are some of the notable methods for transforming sparse microbial data sets. \\

% \subsubsection*{Centered Log Ratio Transformation (CLR) ~\cite{10.7554/eLife.21887}}
% The equation for the Centered Log Ratio transformation is:
% \begin{equation}
% x_i' = \log\left(\frac{x_i}{g(x)}\right) 
% \text{where $x_i = $ the count of a taxa in a sample}
% % \end{equation}

% \begin{equation}
% \text{Geometric mean of the data or } g(x) = \left(\prod_{i=1}^{n} x_i\right)^{\frac{1}{n}}, n = \text{the total number of taxa}
% \end{equation}

% \subsubsection*{Log10 Transformation}
% A common approach for analyzing microbial composition data is to apply a $log10$ transformation. 
% This involves taking the logarithm base 10 of the $(counts + 1)$ ~\cite{10.1371/journal.pcbi.1004226} for each count data point in the OTU dataset. 
% This transformation has the advantage of shrinking the range of the data and stabilizing the variance across different abundance levels.
\subsubsection*{Standard Scaling}
Standard scaling normalizes each taxon column to have zero mean and unit variance for numerical stability.
This can help to reduce the influence of outliers and scale differences among taxa.
We apply standard scaling on the Amgut2 real microbiome data set and run the cross-validation analysis with various algorithms. 
The results are shown in Figure~\ref{fig:data_prep}. 
Let $N$ be the sample size, $\bar{x_j}$ be the mean of the $j^{th}$ taxa across all samples, $s_j$ be the standard deviation of the $j^{th}$ taxa across all samples and $x_{ij}$ be the count of taxon $j$ in sample $i\in\{1,\dots,N\}$.
The standard scaling transformation is given by :
\begin{equation*}
x_{ij} = \frac{x_{ij} - \bar{x_j}}{s_j}
\end{equation*}

\subsubsection*{Yeo-Johnson Power Transformation}
The Yeo-Johnson power transformation is a method for transforming numerical variables to approximate a normal distribution~\cite{10.1093/biomet/87.4.954}. 
This transformation is inspired by the log transformation that has been used in previous studies~\cite{10.1371/journal.pcbi.1002687, 10.1093/bioinformatics/btv349, 10.1093/bioinformatics/btv364}, but it differs in the mathematical function that it applies depending on the sign of the count value. Moreover, it involves a power parameter that determines the extent of the transformation and that is estimated from the data itself using the maximum likelihood method~\cite{10.1093/biomet/87.4.954}.
Let $\lambda$ be the power parameter, 
$x_{ij}$ be the count data and 
$y^{(\lambda)}$ be the transformed count data.
The Yeo-Johnson transformation is given by :
\begin{equation*} 
x_{ij}^{(\lambda)} = 
\begin{cases} 
\left[\left(y+1\right)^{\lambda}-1\right] / \lambda, & \text {if } \lambda \neq 0, y \geq 0 \\ \log \left(y+1\right), & \text {if } \lambda = 0, y \geq 0 \\ -\left[\left(-y+1\right)^{2-\lambda}-1\right] / (2-\lambda), & \text {if } 2-\lambda \neq 0, y < 0 \\ -\log \left(-y+1\right), & \text {if } 2-\lambda = 0, y < 0 
\end{cases} 
\end{equation*}
The Amgut2 real microbiome dataset undergoes the Yeo-Johnson transformation and standard scaling before we conduct the cross-validation analysis with various algorithms. Figure~\ref{fig:data_prep} plots the error on test set of the algorithms against the number of train samples. The results demonstrate that the Yeo-Johnson transformation substantially enhances the prediction accuracy on the test set relative to standard scaling only.

\begin{figure}[H]
\includegraphics[scale=0.7]{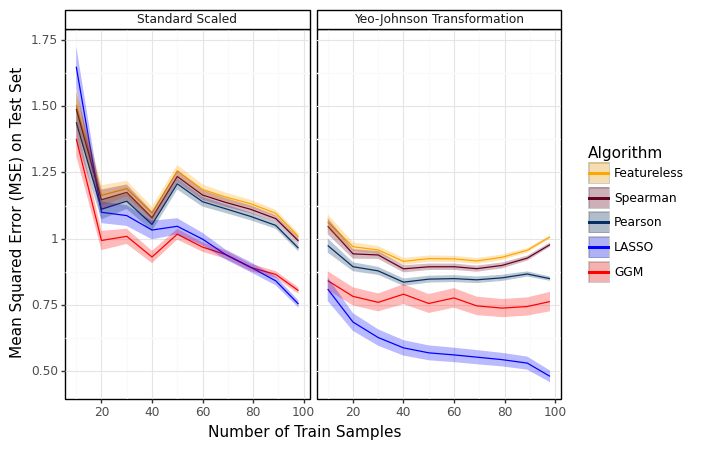}
\caption{\textbf{How different Transformation Methods affect the Prediction Accuracy of algorithms}\\
This figure evaluates the performance of different algorithms on the Amgut2 real dataset under standard scaling only (left panel) and, Yeo-Johnson transformation and standard scaling (right panel). 
The results imply that just standard scaling alone (applied to the raw data set) yields lower accuracy than the combination of Yeo-Johnson and standard scaling for each of the algorithms compared.}
\label{fig:data_prep}
\end{figure}

\subsection*{Cross-Validation for Evaluating Co-occurrence Network Inference Algorithms}
\label{sec:cv}
Cross-validation is a standard algorithm in machine learning used for selection, evaluation and estimation of performance of models. 
It has been previously used in the context of microbiome for training co-occurrence network inference algorithms~\cite{10.1093/bioinformatics/btv349}. 
Our study introduces cross-validation as a novel criterion to test the performance of co-occurrence network inference algorithms on microbiome data.
In Figure~\ref{fig:cv}A, we show how $K=3$ fold cross-validation can be used in the context of microbiome data.
The analysis is repeated D times, each time using a different taxon as the outcome variable and the remaining taxa as the predictor variables.
We randomly split the data into 3 folds. 
One of the three folds is used as test set whilst the other two folds are used as the train set.
We fit each algorithm on the train set, which is further split into subtrain and validation sets to learn the hyper-parameters of the model. 
We select the best model based on the validation score and fit it on the whole training set. 
We then evaluate it on the test set. 
We repeat this process 3 times and average the test errors to get the overall performance metric.
We show a learned regression model in Figure~\ref{fig:cv}B from cross-validation which is used to infer the co-occurrence network in Figure~\ref{fig:cv}C.
As shown in the network graph where $D=7$ taxa, there is an edge between two taxa only if the relationship between them is positive or negative.

\begin{figure}[t]
\includegraphics[width=\linewidth]{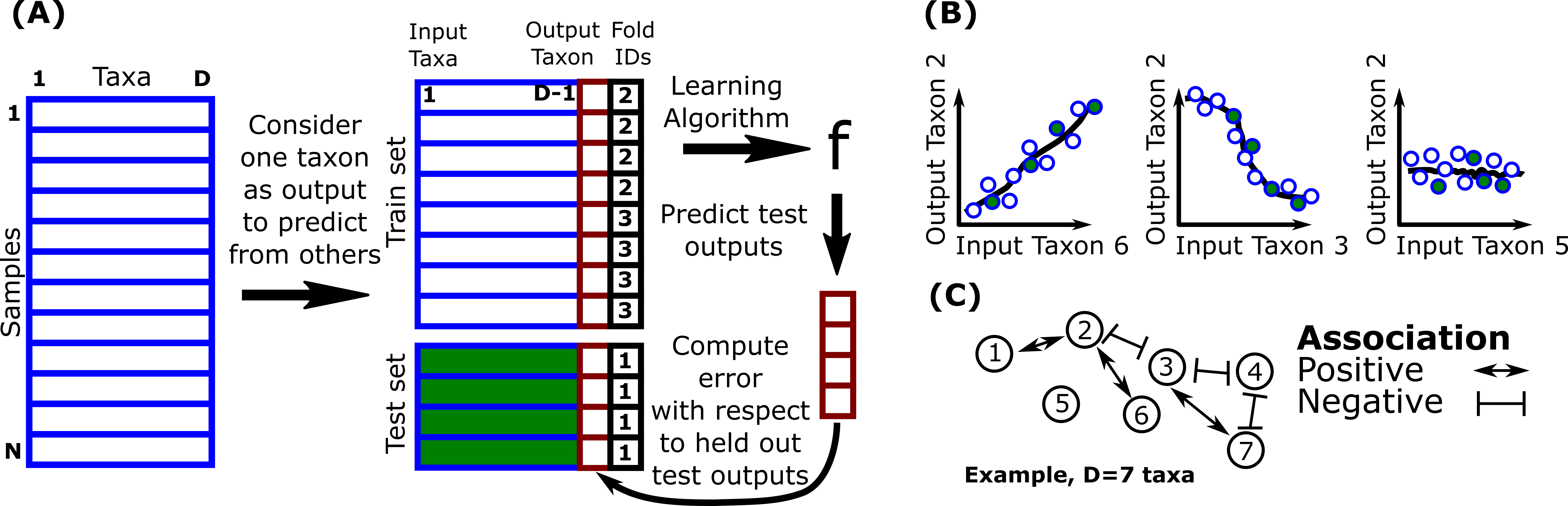}
\vskip 0.2in
\caption{
(A) Proposed cross-validation for evaluating network inference algorithms. 
(B) Learned regression model. 
(C) Co-occurrence network: Nodes represent distinct taxa/bacteria and edges represent positive or negative associations.}
\label{fig:cv}
\end{figure}

\subsection*{Correlation Based Methods}
\label{sec:pearson-spearman-correlation}
\subsubsection*{Pearson Correlation}
Pearson correlation coefficient is the standard tool to infer a network through correlation analysis among all pairs of OTU (Operational Taxonomic Unit) samples. 
It measures the strength and direction of the relationship between two variables. 
It ranges from –1 to 1, where -1 indicates a perfect negative linear relationships, 0 indicates no linear relationship and +1 indicates a perfect positive relationship.  
In most literature~\cite{Deng2012,10.1371/journal.pcbi.1002687, 10.12688/f1000research.9050.2}, there is an arbitrary or pre-determined threshold chosen to select the range of values which is regarded as proof of positive or negative association.
For a pair $(x_1,x_2)$ of standard scaled taxa that follow a bivariate normal distribution with Pearson correlation coefficient $\rho_{x_1,x_2}$, marginal standard deviations $\sigma_{x_1}$ and $\sigma_{x_2}$, the predicted value of $x_1$ given $x_2$ is given below.
\begin{equation}
x_1 =  \rho_{x_1,x_2}\frac{\sigma_{x_1}}{\sigma_{x_2}}(x_2)
\label{eq:pearson_algo}
\end{equation}
This expression is used to compute a prediction for the test set given a trained model that was fit on a training set. 
The parameter, $\rho_{x_1,x_2}$ is learned from the training data.

\subsubsection*{Spearman Correlation}
Spearman correlation coefficient is another popular correlation method for microbial network inference. 
It is often adopted as an alternative to the Pearson correlation coefficient when dealing with non-linear relationships between taxa. 
It is less sensitive and robust to outliers. 
Just as Pearson coefficient, the value of the Spearman coefficient ranges from -1 to +1 , with -1 indicating a perfect negative monotonic relationship, 0 indicating no monotonic relationship, and +1 indicating a perfect positive monotonic relationship.
Spearman coefficient is the Pearson coefficient of ranked data~\cite{ALJABERY20207}. 
We implemented the Spearman algorithm by converting the data into ranks adopting the Pearson Correlation algorithm to predict the ranks. 
For a pair $(r(x_1), r(x_2))$ of standard scaled taxa that follow a bivariate normal distribution with correlation coefficient $\rho_{x_1,x_2}$, marginal standard deviations $\sigma_{r(x_1)}$ and $\sigma_{r(x_2)}$, the predicted rank value, $r(x_1)$ given $r(x_2)$ is given below.
\begin{equation}
r(x_1) = s_{x_1,x_2}\frac{\sigma_{r(x_1)}}{\sigma_{r(x_2)}}(r(x_2))
\label{eq:spearman_algo}
\end{equation}
The model contains the parameters $\sigma_{r(X)}$, $\sigma_{r(Y)}$ and $s_{x_1,x_2}$, which are estimated from the training data. 
We used linear interpolation~\cite{oravkin2021optimal} to infer the actual predicted values from the predicted ranks. 
Linear interpolation is a technique widely adopted to estimate a value within a range of known values by calculating the proportionate relationship between the known values. 
Therefore, we utilize the actual values of the training data alongside their corresponding ranks to estimate the real values of the predicted ranks for the test data. 
Specifically, we use the known pairs of (value, rank) in the training data to form a linear relationship between values and their ranks. 
We then apply this relationship to the predicted ranks to estimate the actual values. 

\begin{figure}[t]
\includegraphics[scale=0.45]{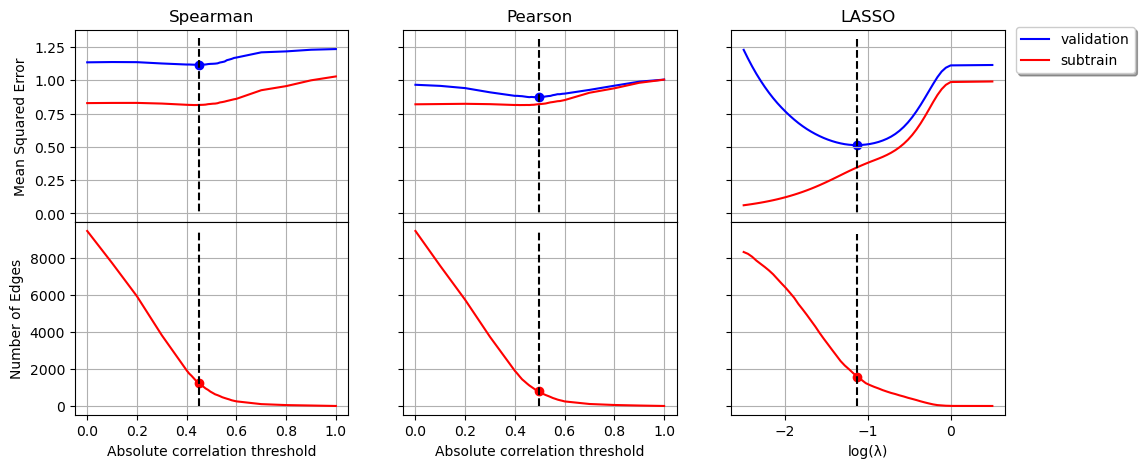}
\caption{Training the Pearson correlation threshold using cross-validation}
\label{fig:algos_mc}
\end{figure}

\subsubsection*{Training of Correlation Based Methods}
One of the most notable challenges is selecting the Pearson or Spearman's rank correlation coefficient threshold for the co-occurrence network inference. 
This should be done so as to limit the network to only edges whose magnitudes are greater than the threshold. 
While most literature~\cite{Deng2012,10.1371/journal.pcbi.1002687, 10.12688/f1000research.9050.2} often choose an arbitrary or pre-determined value as the correlation threshold, the choice of threshold can significantly impact the results and conclusions drawn from the analysis. 
Therefore, it is crucial to carefully consider and justify the choice of correlation threshold. 
% LASSO
The LASSO is a form of linear regression which uses L1 regularization technique and taxon selection to increase the accuracy of prediction~\cite{https://doi.org/10.1111/j.2517-6161.1996.tb02080.x}. 
L1 regularization adds a penalty which causes the regression coefficient of the less contributing taxon to shrink to zero or near zero. 
In this algorithm, the overall objective is to minimize the loss function with respect to the coefficients. 
Let $\mathbf{X} \in \mathbb{R}^{N \times D}$ be the compositional data matrix where each row represents a sample and each column represents a taxon, 
$\mathbf{y} \in \mathbb{R}^n$ be the target taxon vector, 
$\mathbf{w} \in \mathbb{R}^p$ be the coefficient vector, and 
$\beta_0$ be the intercept term.
The linear model can be defined as:
\begin{equation}
f(\mathbf{x}) = \beta_0 + \mathbf{x}^T \mathbf{w}
\label{eq:lasso}
\end{equation}
Then, the loss function of the LASSO regression model can be formulated as:
\begin{equation*}
L(\beta_0,\mathbf{w}) = \frac{1}{2n}||\mathbf{y} - \beta_0 - \mathbf{X}\mathbf{w}||^2_2 + \lambda ||\mathbf{w}||_1
\end{equation*}
The first term is the residual sum of squares (RSS), which is the deviation of the predicted values from the actual values. 
The second term is the L1 penalty term that encourages sparsity in the coefficient estimates.  
$\lambda$ is the regularization parameter that controls the amount of shrinkage. 
With cross-validation algorithm, optimum LASSO model is selected and the coefficient of this model is used for network inference~\cite{scikit-learn}. 
Train set is split into subtrain set (used to learn regression coefficients) and validation set (used to learn the degree of L1 regularization, which controls sparsity / number of edges in co-occurrence network).
In Figure~\ref{fig:algos_mc}, we leverage 3-fold cross-validation to choose the optimal values for the correlation coefficient threshold and $\lambda$, which minimize the validation error when used for prediction.
This figure illustrates how the test error and the number of edges vary with correlation threshold for the correlation based algorithms and $\log(\lambda)$ of the LASSO model, applied to the Amgut2 data set~\cite{mcdonald2018american}.
We systematically varied these hyperparameters and monitored the resulting subtrain and validation errors. The adoption of $\log(\lambda)$, rather than $\lambda$, enhances the interpretability of the graph and mitigates potential distortion arising from extreme $\lambda$ values.
The error curves reveal tendencies towards overfitting for small thresholds or $\log(\lambda)$ values (leading to many edges) and underfitting for large thresholds or $\log(\lambda)$ values (resulting in fewer edges). 
We selected the value of $\lambda$ corresponding to the minimum validation error, which yielded a network with 1585 edges. 
For the Pearson correlation coefficient, the optimal threshold was found to be $0.495$, resulting in 785 edges, while for the Spearman correlation coefficient, the optimal threshold was $0.448$, resulting in 1231 edges. 
These thresholds were chosen because they minimized the validation error, rendering correlation values smaller than these thresholds incapable of establishing edges in the co-occurrence network.

\subsection*{Gaussian Graphical Model (GGM)}
\label{sec:gaussian-graphical-model}
The Gaussian distribution is a continuous and symmetrical probability distribution that explains how the outcomes of a random variable are distributed. 
The shape of the Gaussian distribution is determined by its mean and standard deviation, which evaluates the location and spread of the distribution, respectively. 
Most observations cluster around the mean of the distribution~\cite{gaussian-dist-article}.
The Probability Density Function (PDF) of a multivariate normal distribution is frequently employed in data analysis to model complex data sets that involve multiple variables~\cite{scott2015multivariate}. 
Let $D$ be the total number of taxa, 
$\boldsymbol{x}$ be a $D$-dimensional row/sample vector, 
$\boldsymbol{\Sigma}$ be a $D\times D$ covariance matrix, 
$\boldsymbol{\Omega}$ be a $D\times D$ precision matrix comprised of $\omega_{ij}$ elements and 
$\boldsymbol{x}^T$ denote the transpose of $\boldsymbol{x}$. 
The multivariate normal distribution PDF is given below.
\begin{equation*}
    f(\boldsymbol{x}) = \frac{1}{\sqrt{(2\pi)^D|\boldsymbol{\Sigma}|}}\exp\left(-\frac{1}{2}\boldsymbol{x}^T\boldsymbol{\Omega}\boldsymbol{x}\right)
\end{equation*}
The predicted value of the first taxon ($x_1$) can be calculated by finding the conditional  mean of the distribution. 
This is the value of $x_1$ when $f(x)$ is maximum. 
Therefore, we take the partial derivative of $f(x)$ with respect to $x_1$ and equate to zero. 
As demonstrated in the \nameref{GGM_Proof}, we solve for the value of $x_1$, which leads us to the following equation which we use to compute predictions,
\begin{equation}
x_1 = \frac{-1}{2\omega_{11}}\left( \sum_{i=2}^D \omega_{i1} x_i + \sum_{j=2}^D \omega_{1j} x_j \right) 
\label{eq:ggm}
\end{equation}
This is well known for the special case of $D=2$ (See proof in Supplementary Information), the conditional mean of a bivariate normal (\ref{eq:pearson_algo}) under the assumption that data is standard scaled thus zero mean and unit variance. 
Our contribution here is to derive a formula for the general case, $D>2$ (See proof in Supplementary Information).
The inverse covariance matrix (precision matrix) is computed from the train dataset in the GGM. 
The conditional independence structure among taxa is represented by the sparsity pattern of the precision matrix. 
This sparsity pattern can be estimated from data using various methods, such as maximum likelihood estimation or penalized likelihood methods. 
The Graphical Lasso (GLASSO) is used to estimate the precision matrix from high-dimensional data. 
In GLASSO, the penalty is applied to the elements of the precision matrix, resulting in a sparse estimate of the matrix. 
Given a train data matrix $\mathbf{X} \in \mathbb{R}^{N \times D}$ where $N$ is the number of samples and $D$ is the number of taxa, the goal is to estimate the precision matrix $\bm{\Theta}$ that satisfies the following optimization problem. 
Let $\mathbf{S} = \frac{1}{N} \mathbf{X}^\top \mathbf{X}$ be the sample covariance matrix, 
$\lVert \bm{\Theta} \rVert_1$ be the L1-norm penalty term to promote sparsity in the precision matrix, 
$\lambda$ be the regularization parameter that controls the strength of the penalty term.
The precision matrix is given by:  
\begin{equation*}
\hat{\bm{\Theta}} = \underset{\bm{\Theta} \succeq 0}{\operatorname{argmin}} \left( \operatorname{tr}(\mathbf{S}\bm{\Theta}) - \log\det(\bm{\Theta}) + \lambda \lVert \bm{\Theta} \rVert_1 \right)
\label{eq:precision}
\end{equation*}
The constraint $\bm{\Theta} \succeq 0$ enforces the positive semi-definiteness of the precision matrix. 
The solution $\hat{\bm{\Theta}}$ corresponds to the maximum likelihood estimate of the precision matrix under the sparsity constraint. 
The precision matrix is used to infer the network graph of the taxa based on their conditional dependencies. The presence or absence of an edge between taxa $i$ and $j$ in the graph is determined by the value of $\hat{\Theta}_{ij}$ in the precision matrix. 
An edge between taxa $i$ and $j$ exists if and only if $\hat{\Theta}_{ij} \neq 0$.

% Results and Discussion can be combined.
\section*{Results}
In this study, we conducted a real microbiome composition data analysis on Amgut1~\cite{10.1371/journal.pcbi.1004226}, crohns~\cite{10.1093/bioinformatics/btz824} and iOral~\cite{Ha2020} real microbiome composition data sets because they are public and widely used when comparing previous algorithms. 
We used the Yeo-Johnson power transformation~\cite{10.1093/biomet/87.4.954} in combination with standard scaling, so that each column has zero mean and unit variance (for numerical stability). 
We wrote python code to implement the various algorithms. 
We specifically utilized the LassoCV and GraphicalLassoCV classes from scikit-learn package~\cite{scikit-learn} to implement the LASSO algorithm and estimate the precision matrix for the GGM respectively.

\subsection*{Impact of Total Sample Size on Test Error}
In Figure~\ref{fig:model_comparison}, we investigated how the test error varies with the number of total samples for the various algorithms, including the featureless/baseline method, where the predicted values were computed using the mean of the train data.
We sub-sample each data set by randomly dividing it into a series of different sample sizes (10, 20, etc), before we run the cross-validation analysis on each sample size. 
The relationship between each pair of taxon columns is utilized for prediction, as shown in the equations~\ref{eq:pearson_algo} and~\ref{eq:spearman_algo} for Pearson and Spearman algorithms respectively.
The predicted value for both LASSO and GGM algorithms is calculated using the equations~\ref{eq:lasso} and~\ref{eq:ggm} respectively.
The test error was computed by taking the average of the Mean Squared Error (MSE) of the predicted values compared to the actual test values, across all the taxa in each of the data sets, test sets and sub-samples. 
The lower and upper bounds of the MSE line represent the variance of the MSE. 
For the Amgut1 data set, GGM achieved the highest accuracy from 10 to 20 sample size, but LASSO performed best for larger sample sizes (above 30). 
The GGM outperformed the other algorithms on the iOral data set.
The results from the crohns data set suggest that both LASSO and GGM algorithms may be good choices for this data set, as they performed similarly well. 
The figure also provides insights into the minimum sample size required for a useful cross-validation of the algorithms. 
The plot reveals that significant differences between algorithms are apparent even with only 10 samples.
These findings highlight the importance of selecting an appropriate algorithm for a given dataset, as different algorithms may perform differently depending on the characteristics of the data. 
Therefore, it is crucial to consider multiple algorithms and evaluate their performance before selecting the most appropriate one. 
Additionally, it may be necessary to use a combination of algorithms to obtain the best results.

\begin{figure}[t]
    \includegraphics[scale=0.8]{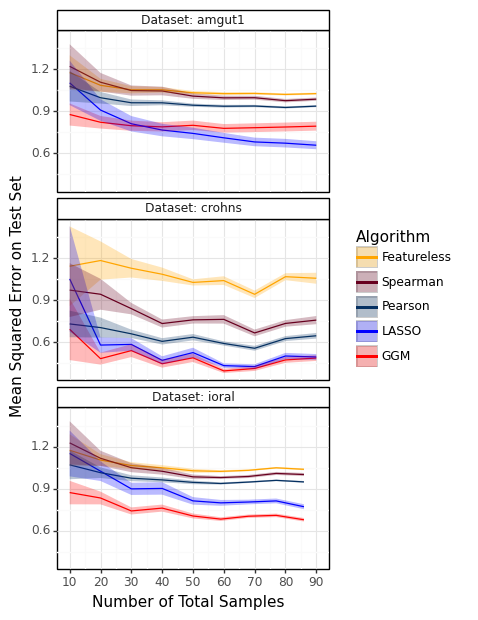}
    \caption{Model Comparison using Test Error}
\label{fig:model_comparison}
\end{figure}

\subsection*{Microbial Association Network Inference}
The microbial association network was inferred using the various algorithms on the three real microbiome datasets: amgut1, crohns, and ioral. 
Following the identification of the optimal model for each algorithm, pairwise positive and negative associations between taxa in the data sets were computed to infer the co-occurrence network.
For the correlation-based algorithms, the correlation matrix is estimated by calculating the pairwise correlation coefficient for each taxon pair, and the network is constrained by the learned correlation threshold.
In the case of the LASSO algorithm, we save the coefficients of the optimal model at each iteration of the taxa columns, hence forming an association matrix.
For the GGM, the GLASSO inferred precision matrix is used for the association matrix.
We compute the mean of the upper and lower triangular matrices for each of the LASSO and the GGM, resulting in lower triangular matrices for each algorithm.
In the resultant lower triangular matrix of the association matrix, an edge is identified if its value is non-zero. 
A positive value indicates a positive association, while a negative value indicates a negative association.
Through the application of 3-fold cross-validation analysis, three networks are inferred for each algorithm based on the fold IDs.
The final network obtained is the median of the three networks inferred by the 3 folds.

\begin{figure}[t]
    \includegraphics[scale=0.3]{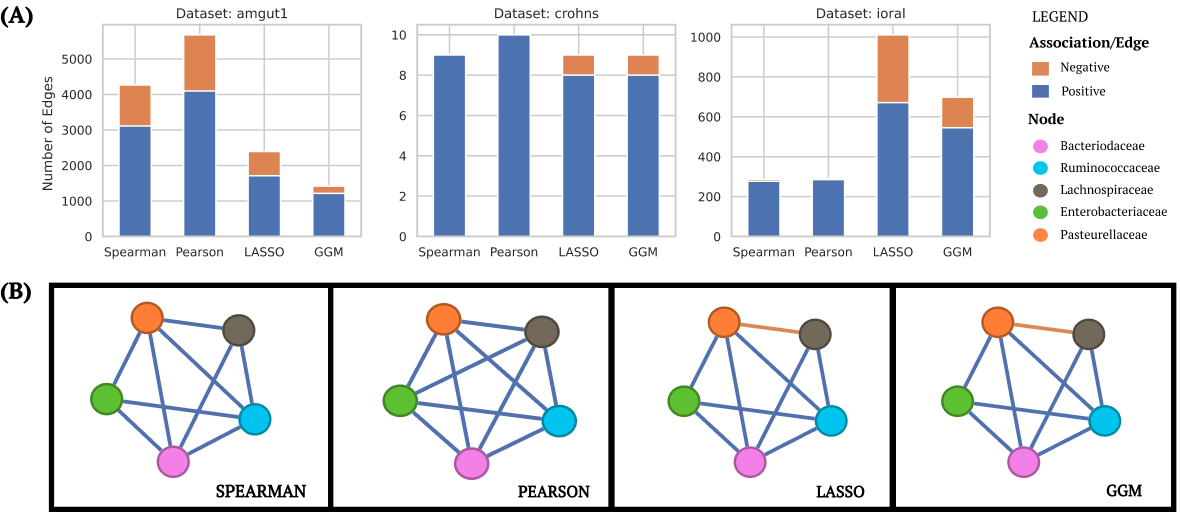}
    \vskip 0.3in
    \caption{
    (A) Model comparison using inferred positive and negative associations.\\ 
    (B) Microbial Network graph of crohns data set.\\
    }
\label{fig:model_ass_comparison}
\end{figure}

In Figure~\ref{fig:model_ass_comparison}A, we showcase how the number of edges inferred varies with each of the algorithms in the various real microbiome data sets.
From the graph, we can infer that the Pearson mostly has more number of edges than Spearman because the taxa have strong linear relationships with each other and moderate non-linear relationships that are not captured by linear correlation.
The LASSO typically infers more number of edges than GGM because GGM employs the precision matrix that measures the partial correlation between taxa.

In Figure~\ref{fig:model_ass_comparison}B, we show the co-occurrence network graph of the crohns data set which has 5 different bacteria and 100 total samples.
The Pearson algorithm produces a full connected network, while the Spearman algorithm excludes the edge between \textit{Lachnospiraceae} and \textit{Enterobacteriaceae}.
The LASSO and the GGM share the same network topology as the Spearman, but differ in the sign of the associations: the LASSO and GGM reveal a negative association between \textit{Pasteurellaceae} and \textit{Lachnospiraceae}.
% We also show in Figure~\ref{fig:model_ass_comparison}C, the relationship between the mean number of edges and number of total samples for each algorithm on the various data sets. 
% From the graph, we can observe that Spearman and Pearson have similar patterns of number of edges as a function of sample size.
% LASSO and GGM have opposite patterns of number of edges as a function of sample size. 
% They can adjust the number of edges according to the sample size and avoid overfitting or underfitting. 
% However, they may also miss some important connections that are present in the data but not detected by these algorithms.
% Pearson and Spearman may be less reliable algorithms for network inference, as they tend to overestimate the number of edges regardless of the sample size and produce spurious connections. 
% However, they may also capture some weak connections that are ignored by LASSO and GGM.
The choice of algorithm depends on the research question and the data quality. 
If the goal is to find a robust network structure with high confidence, then LASSO or GGM may be preferable. 
However, if the goal is to explore a comprehensive and diverse network structure with low confidence, then Pearson or Spearman may be more suitable.

\section*{Discussion and Conclusion}
This study provides a comparative analysis of different algorithms for inferring microbial association networks from real microbiome composition data.
We propose cross-validation as a more widely applicable evaluation criterion for training and testing the various algorithms used for inferring microbial co-occurrence networks. 
We also introduce a novel technique of using previous algorithms for prediction on test data. 
The results from Figure~\ref{fig:model_comparison} show that the accuracy of the algorithms generally improved upon increasing the number of total samples. 
We can also infer that the selection of an algorithm should depend on the specific data set being examined and the research question being answered since the choice of algorithm can have a significant impact on the structure of the resulting microbial network. 
The findings indicate that LASSO and GGM are most accurate for inferring co-occurrence networks, in the Amgut1, crohns and iOral real microbiome composition data sets that we examined. 
For future research, we are considering using~(\ref{eq:ggm}) for training GGMs as well as testing which we would expect to be more accurate than employing the maximum likelihood approach to estimate the precision matrix. 
We are also interested in generalizing our proposed cross-validation methods to more complex data with several qSIP features like the abundance, growth rate, death rate and carbon uptake of micro-organisms~\cite{10.1371/journal.pone.0189616}
The findings indicate that both LASSO and GGM are dependable and effective for inferring co-occurrence networks.

\section*{Acknowledgments}
We express our immense gratitude to colleagues Kyle Rust, Jadon Fowler and Cameron Bodine for their expert feedback, constructive criticisms and suggestions which were instrumental to the completion of this research.
This work was funded by the National Science Foundation grant \#2125088 from Rules of Life Program.

\section*{Supporting Information}
% Include only the SI item label in the paragraph heading. Use the \nameref{label} command to cite SI items in the text.
\paragraph*{Data and Code Availability.}
\label{S1_Data}
The data, figures and code used in this study is publicly available and can be accessed at \url{https://github.com/EngineerDanny/CS685-Microbe-Network-Research}.
To reproduce the results, make sure the required libraries are installed and the required data is imported.

\paragraph*{Gaussian Graphical Model Proof}
\label{GGM_Proof}
Let $D$ be the total number of taxa, $\boldsymbol{x}$ be the $D$-dimensional row/sample vector, which we assume follows a multivariate normal distribution with mean 0 and $D\times D$ covariance matrix,
$\boldsymbol{\Sigma}$ of size $D\times D$. 
Let $\boldsymbol{\Omega}$ be the $D\times D$ precision matrix comprised of $\omega_{ij}$ elements and $\boldsymbol{x}^T$ denote the transpose of $\boldsymbol{x}$.
\begin{proposition}
Equation~(\ref{eq:ggm}) can be used to compute the conditional mean of one taxon $x_1$ in a Gaussian Graphical Model, given the other taxa $x_2,\dots,x_D$.
\end{proposition}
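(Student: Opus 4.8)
The plan is to start from the multivariate normal density $f(\boldsymbol{x})$ stated above (with mean $\boldsymbol{0}$ and precision matrix $\boldsymbol{\Omega}$) and to exploit the fact that, for a jointly Gaussian vector, the predicted value of $x_1$ given $x_2,\dots,x_D$ is the conditional mean, which for a Gaussian coincides with the conditional mode. First I would observe that conditioning on $x_2,\dots,x_D$ rescales the joint density only by the marginal density of those coordinates, which does not depend on $x_1$; hence maximizing $f(\boldsymbol{x})$ over $x_1$ with the remaining coordinates held fixed is equivalent to maximizing the conditional density $f(x_1\mid x_2,\dots,x_D)$, and its argmax is the conditional mode. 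This reduces the claim to a one-dimensional optimization of the exponent.

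Next I would carry out that optimization explicitly. Expanding the quadratic form as $\boldsymbol{x}^T\boldsymbol{\Omega}\boldsymbol{x}=\sum_{i=1}^D\sum_{j=1}^D \omega_{ij}x_ix_j$ and collecting the terms that involve $x_1$ (the diagonal term $\omega_{11}x_1^2$ together with the cross terms $\omega_{1j}x_1x_j$ and $\omega_{i1}x_ix_1$), I would differentiate with respect to $x_1$. Because $f>0$ everywhere, the stationarity condition $\partial f/\partial x_1=0$ is equivalent to stationarity of the exponent, which gives $2\omega_{11}x_1+\sum_{i=2}^D\omega_{i1}x_i+\sum_{j=2}^D\omega_{1j}x_j=0$. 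Solving this linear equation for $x_1$ yields precisely~(\ref{eq:ggm}).

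Two points then settle the matter. First, I would confirm the stationary point is a maximizer: the second derivative of the exponent in $x_1$ is $-\omega_{11}$, and since $\boldsymbol{\Omega}$ is the inverse of a covariance matrix it is positive definite, so $\omega_{11}>0$, the exponent is strictly concave in $x_1$, and the critical point is the unique global maximizer. Second, and this is the conceptual crux rather than the calculus, I would justify that this mode is the conditional mean. The hard part will be making the mode-equals-mean identification rigorous rather than merely asserting it; the cleanest route is to invoke the standard fact that the conditional law of a jointly Gaussian vector is again Gaussian, hence symmetric and unimodal, so its mode and mean agree.

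Finally, I would remark that the symmetry of the precision matrix, $\omega_{i1}=\omega_{1i}$, collapses the two sums in~(\ref{eq:ggm}) into one, recovering the familiar closed form $x_1=-\omega_{11}^{-1}\sum_{j=2}^D\omega_{1j}x_j$, and that specializing to $D=2$ reproduces the bivariate conditional mean~(\ref{eq:pearson_algo}) under the zero-mean, unit-variance standard-scaling assumption, consistent with the remark already made in the text.
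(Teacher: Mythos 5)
Your proposal is correct and follows essentially the same route as the paper's own proof: expand the quadratic form $\boldsymbol{x}^T\boldsymbol{\Omega}\boldsymbol{x}$, differentiate with respect to $x_1$, set the derivative to zero, and solve the resulting linear equation. You additionally supply justifications the paper leaves implicit --- that maximizing the joint density in $x_1$ is equivalent to maximizing the conditional density, that $\omega_{11}>0$ makes the stationary point a maximum, and that the Gaussian conditional's mode equals its mean --- which strengthens rather than changes the argument.
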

\begin{proof}

The multivariate normal distribution Probability Density Function is given by:
\begin{equation*}
    f(\boldsymbol{x}) = \frac{1}{\sqrt{(2\pi)^D|\boldsymbol{\Sigma}|}}\exp\left(-\frac{1}{2}\boldsymbol{x}^T\boldsymbol{\Omega}\boldsymbol{x}\right)
\end{equation*}

The predicted value of taxon $x_1$ can be calculated by finding the conditional mean of the distribution via the equation below. 
\begin{align*}
\frac{\partial}{\partial x_1}f(\boldsymbol{x}) &= 0 \\
\frac{\partial}{\partial x_1} \left(\frac{1}{\sqrt{(2\pi)^D|\boldsymbol{\Sigma}|}}\exp\left(-\frac{1}{2}\boldsymbol{x}^T\boldsymbol\Omega\boldsymbol{x}\right)\right) &= &&\text{( Substitute $f(\boldsymbol{x})$ )}\\
\frac{\partial}{\partial x_1} \exp\left(-\frac{1}{2}\boldsymbol{x}^T\boldsymbol{\Omega}\boldsymbol{x}\right) &= &&\text{( Remove constants )}\\
 \exp\left(-\frac{1}{2}\boldsymbol{x}^T\boldsymbol{\Omega}\boldsymbol{x}\right)
 \frac{\partial}{\partial x_1}\left(-\frac{1}{2}\boldsymbol{x}^T\boldsymbol{\Omega}\boldsymbol{x}\right) &= &&\text{( Differentiate expression )}\\
  \frac{\partial}{\partial x_1}\left(\boldsymbol{x}^T\boldsymbol{\Omega}\boldsymbol{x}\right) &= &&\text{( Remove constants and simplify )}\\
  \frac{\partial}{\partial x_1} \left(\sum_{i=1}^D \sum_{j=1}^D \omega_{ij} x_i x_j\right) &= &&\text{( Expand the bracket )}\\
  \frac{\partial}{\partial x_1} \left(\omega_{11} x_1^2 + \sum_{i=2}^D \omega_{i1} x_i x_1 + \sum_{j=2}^D \omega_{1j} x_1 x_j\right) &= &&\text{( Simplify further )}\\
2\omega_{11} x_1 + \sum_{i=2}^D \omega_{i1} x_i + \sum_{j=2}^D \omega_{1j} x_j &= 0 &&\text{( Differentiate expression )}\\\\
\implies \frac{-1}{2\omega_{11}}\left( \sum_{i=2}^D \omega_{i1} x_i + \sum_{j=2}^D \omega_{1j} x_j \right) &= x_1 &&\text{( Solve for $x_1$ )}\\ 
\end{align*}
\end{proof}

\begin{corollary}
For the special case of $D=2$, the multivariate Gaussian conditional mean~(\ref{eq:ggm}) simplifies to~(\ref{eq:pearson_algo}) under the assumption that the data is standard scaled; zero mean and unit variance for each taxon.
\end{corollary}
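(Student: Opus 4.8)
The plan is to specialize the general conditional-mean formula~(\ref{eq:ggm}) to the two-taxon setting and then express the resulting precision-matrix ratio in terms of the Pearson correlation coefficient, thereby recovering~(\ref{eq:pearson_algo}).

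First I would set $D=2$ in~(\ref{eq:ggm}). Each of the two sums then contains exactly one term (the $i=2$ and $j=2$ contributions), so the formula collapses to
\[
x_1 = \frac{-1}{2\omega_{11}}\left(\omega_{21}x_2 + \omega_{12}x_2\right).
\]
Invoking the symmetry of the precision matrix, $\omega_{12}=\omega_{21}$, the two off-diagonal contributions combine and the factor of two cancels, leaving $x_1 = -(\omega_{12}/\omega_{11})\,x_2$.

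Next I would relate the precision-matrix entries to the covariance structure by explicitly inverting the $2\times 2$ covariance matrix. Writing $\boldsymbol{\Sigma}$ in terms of the marginal variances and the Pearson coefficient $\rho_{x_1,x_2}$, the closed form for a $2\times 2$ inverse gives $\omega_{11}=\sigma_{x_2}^2/\det\boldsymbol{\Sigma}$ and $\omega_{12}=-\rho_{x_1,x_2}\sigma_{x_1}\sigma_{x_2}/\det\boldsymbol{\Sigma}$. The common determinant factor cancels in the ratio, so that $-\omega_{12}/\omega_{11}=\rho_{x_1,x_2}\,\sigma_{x_1}/\sigma_{x_2}$, which is exactly the coefficient appearing in~(\ref{eq:pearson_algo}). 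Finally, I would impose the standard-scaling assumption $\sigma_{x_1}=\sigma_{x_2}=1$, under which $\boldsymbol{\Sigma}$ becomes the correlation matrix and the expression reduces to $x_1=\rho_{x_1,x_2}\,x_2$, matching~(\ref{eq:pearson_algo}) as claimed.

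There is no deep obstacle here; the argument is a short algebraic specialization, so the only points requiring care are bookkeeping ones. I would watch the factor of two arising from the two symmetric off-diagonal terms, and I would track the sign that appears when inverting $\boldsymbol{\Sigma}$, since it is precisely this sign flip that converts the negative off-diagonal entry of the precision matrix into the positive correlation coefficient in the final formula. I would also remark that the zero-mean part of standard scaling is exactly what legitimizes the mean-free Gaussian density used to derive~(\ref{eq:ggm}), so standard scaling renders both ingredients of the corollary mutually consistent.
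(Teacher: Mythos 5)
Your proposal is correct and follows essentially the same route as the paper's own proof: specialize~(\ref{eq:ggm}) to $D=2$, combine the two off-diagonal terms via the symmetry $\omega_{12}=\omega_{21}$, and substitute the explicit entries of the inverted $2\times 2$ covariance matrix so that the determinant (equivalently the $1-\rho_{x_1,x_2}^2$ factor) cancels, leaving the coefficient $\rho_{x_1,x_2}\,\sigma_{x_1}/\sigma_{x_2}$ of~(\ref{eq:pearson_algo}). The only cosmetic difference is that you carry the computation one step further by setting $\sigma_{x_1}=\sigma_{x_2}=1$, whereas the paper stops once the form of~(\ref{eq:pearson_algo}) is reached.
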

\begin{proof}
For a pair $(x_1,x_2)$ of taxa that follow a bivariate normal distribution with correlation coefficient $\rho_{x_1,x_2}$, marginal standard deviations $\sigma_{x_1}$ and $\sigma_{x_2}$, the predicted value of $x_1$ given $x_2$ is given below.
\begin{align*}
\boldsymbol{\Omega} &= \frac{1}{(1-\rho_{x_1,x_2}^2)}
\begin{bmatrix}
\frac{\sigma_{x_2}^2}{\sigma_{x_1}^2\sigma_{x_2}^2} & -\frac{\rho_{x_1,x_2}}{\sigma_{x_1}\sigma_{x_2}} \\
-\frac{\rho_{x_1,x_2}}{\sigma_{x_1}\sigma_{x_2}} & \frac{\sigma_{x_1}^2}{\sigma_{x_1}^2\sigma_{x_2}^2}
\end{bmatrix} &&\text{( Define $\boldsymbol{\Omega}$, when $D = 2$ )}\\
&= \begin{bmatrix}
\frac{1}{\sigma_{x_1}^2(1-\rho_{x_1,x_2}^2)} & -\frac{\rho_{x_1,x_2}}{\sigma_{x_1}\sigma_{x_2}(1-\rho_{x_1,x_2}^2)} \\
-\frac{\rho_{x_1,x_2}}{\sigma_{x_1}\sigma_{x_2}(1-\rho_{x_1,x_2}^2)} & \frac{1}{\sigma_{x_2}^2(1-\rho_{x_1,x_2}^2)}
\end{bmatrix} &&\text{( Expand and simplify )}\\
x_1 &= \frac{-1}{2\omega_{11}}\left( \sum_{i=2}^2 \omega_{i1} x_i + \sum_{j=2}^2 \omega_{1j} x_j \right) &&\text{( Equation for $x_1$, when $D=2$ )}\\
&= \frac{-1}{2\omega_{11}}\left( \omega_{21} x_2 + \omega_{12} x_2 \right) &&\text{( Simplify )}\\
&= \frac{-1}{\omega_{11}}\left(\omega_{21} x_2 \right) &&\text{( Substitute $\omega_{21} = \omega_{12}$ from $\boldsymbol{\Omega}$ ) }\\
&= \frac{-1}{\left( \frac{1}{\sigma_{x_1}^2(1-\rho_{x_1,x_2}^2)} \right)}\left( -\frac{\rho_{x_1,x_2}}{\sigma_{x_1}\sigma_{x_2}(1-\rho_{x_1,x_2}^2)} x_2 \right) &&\text{( Substitute the values of $\omega$ ) }\\
\implies x_1 &= \rho_{x_1,x_2}\frac{\sigma_{x_1}}{\sigma_{x_2}}(x_2) && \text{( Simplify ) } 
\end{align*}
\end{proof}
%%%%%%%%%%%%%%%%%%%%%%%%%%%%%%%%%%%%%%%%%%%%%%%%%%%%%%%%%%%%%%%%%%%%%%%%%%%%%%%
%%%%%%%%%%%%%%%%%%%%%%%%%%%%%%%%%%%%%%%%%%%%%%%%%%%%%%%%%%%%%%%%%%%%%%%%%%%%%%%

% \paragraph*{S1 Table.}
% \label{S1_Table}
% {\bf Lorem ipsum.} Maecenas convallis mauris sit amet sem ultrices gravida. Etiam eget sapien nibh. Sed ac ipsum eget enim egestas ullamcorper nec euismod ligula. Curabitur fringilla pulvinar lectus consectetur pellentesque.

% \nolinenumbers
% Either type in your references using
% \begin{thebibliography}{}
% \bibitem{}
% Text
% \end{thebibliography}
%
% or
%
% Compile your BiBTeX database using our plos2015.bst
% style file and paste the contents of your .bbl file
% here. See http://journals.plos.org/plosone/s/latex for 
% step-by-step instructions.
% 

\bibliographystyle{plos2015}
\bibliography{references}

\end{document}